\newtheorem{lem}{Lemma}
\newtheorem{Thrm}{Theorem}
  \providecommand\BibTeX{{%
    \normalfont B\kern-0.5em{\scshape i\kern-0.25em b}\kern-0.8em\TeX}}}
\begin{document}

%%
%% The "title" command has an optional parameter,
%% allowing the author to define a "short title" to be used in page headers.
\title{Denoising individual bias for a fairer binary submatrix detection}

\author{Changlin Wan}
\affiliation{\institution{Purdue University}}
\email{wan82@purdue.edu}

\author{Wennan Chang}
\affiliation{\institution{Purdue University}}
\email{chang534@purdue.edu}

\author{Tong Zhao}
\affiliation{\institution{Amazon}}
\email{zhaoton@amazon.com}

\author{Sha Cao}
\affiliation{\institution{Indiana University}}
\email{shacao@iu.edu}

\author{Chi Zhang}
\affiliation{\institution{Indiana University}}
\email{czhang87@iu.edu}

%%
%% The "author" command and its associated commands are used to define
%% the authors and their affiliations.
%% Of note is the shared affiliation of the first two authors, and the
%% "authornote" and "authornotemark" commands
%% used to denote shared contribution to the research.
%\authornote{Both authors contributed equally to this research.}
%\email{trovato@corporation.com}
%\orcid{1234-5678-9012}
%\author{G.K.M. Tobin}
%\authornotemark[1]
%\email{webmaster@marysville-ohio.com}
%\affiliation{%
%  \institution{Institute for Clarity in Documentation}
%  \streetaddress{P.O. Box 1212}
%  \city{Dublin}
%  \state{Ohio}
%  \postcode{43017-6221}
%}

%\author{Lars Th{\o}rv{\"a}ld}
%\affiliation{%
%  \institution{The Th{\o}rv{\"a}ld Group}
%  \streetaddress{1 Th{\o}rv{\"a}ld Circle}
%  \city{Hekla}
%  \country{Iceland}}
%\email{larst@affiliation.org}

%%
%% By default, the full list of authors will be used in the page
%% headers. Often, this list is too long, and will overlap
%% other information printed in the page headers. This command allows
%% the author to define a more concise list
%% of authors' names for this purpose.
\renewcommand{\shortauthors}{ et al.}

%%
%% The abstract is a short summary of the work to be presented in the
%% article.
\begin{abstract}
Low rank representation of binary matrix is powerful in disentangling sparse individual-attribute associations, and has received wide applications. Existing binary matrix factorization (BMF) or co-clustering (CC) methods often assume i.i.d background noise. However, this assumption could be easily violated in real data, where heterogeneous row- or column-wise probability of binary entries results in disparate element-wise background distribution, and paralyzes the rationality of existing methods. We propose a binary data denoising framework, namely BIND, which optimizes the detection of true patterns by estimating the row- or column-wise mixture distribution of patterns and disparate background, and eliminating the binary attributes that are more likely from the background. BIND is supported by thoroughly derived mathematical property of the row- and column-wise mixture distributions. Our experiment on synthetic and real-world data demonstrated BIND effectively removes background noise and drastically increases the fairness and accuracy of state-of-the arts BMF and CC methods.
\end{abstract}

%%
%% The code below is generated by the tool at http://dl.acm.org/ccs.cfm.
%% Please copy and paste the code instead of the example below.
%%
\begin{CCSXML}
<ccs2012>
 <concept>
  <concept_id>10010520.10010553.10010562</concept_id>
  <concept_desc>Computer systems organization~Embedded systems</concept_desc>
  <concept_significance>500</concept_significance>
 </concept>
 <concept>
  <concept_id>10010520.10010575.10010755</concept_id>
  <concept_desc>Computer systems organization~Redundancy</concept_desc>
  <concept_significance>300</concept_significance>
 </concept>
 <concept>
  <concept_id>10010520.10010553.10010554</concept_id>
  <concept_desc>Computer systems organization~Robotics</concept_desc>
  <concept_significance>100</concept_significance>
 </concept>
 <concept>
  <concept_id>10003033.10003083.10003095</concept_id>
  <concept_desc>Networks~Network reliability</concept_desc>
  <concept_significance>100</concept_significance>
 </concept>
</ccs2012>
\end{CCSXML}

\ccsdesc[500]{Computing methodologies~Representation of mathematical objects}
\ccsdesc[300]{Computing methodologies~Representation of Boolean functions}

%%
%% Keywords. The author(s) should pick words that accurately describe
%% the work being presented. Separate the keywords with commas.
\keywords{Binary data mining, fairness, denoising, low rank representation}

%% A "teaser" image appears between the author and affiliation
%% information and the body of the document, and typically spans the
%% page.
%\begin{teaserfigure}
%  \includegraphics[width=\textwidth]{sampleteaser}
%  \caption{Seattle Mariners at Spring Training, 2010.}
%%  \Description{Enjoying the baseball game from the third-base
%  seats. Ichiro Suzuki preparing to bat.}
%  \label{fig:teaser}
%\end{teaserfigure}

%%
%% This command processes the author and affiliation and title
%% information and builds the first part of the formatted document.
\maketitle

\section{motivation}

\begin{figure}
    \centering
    \includegraphics[width=0.8\linewidth]{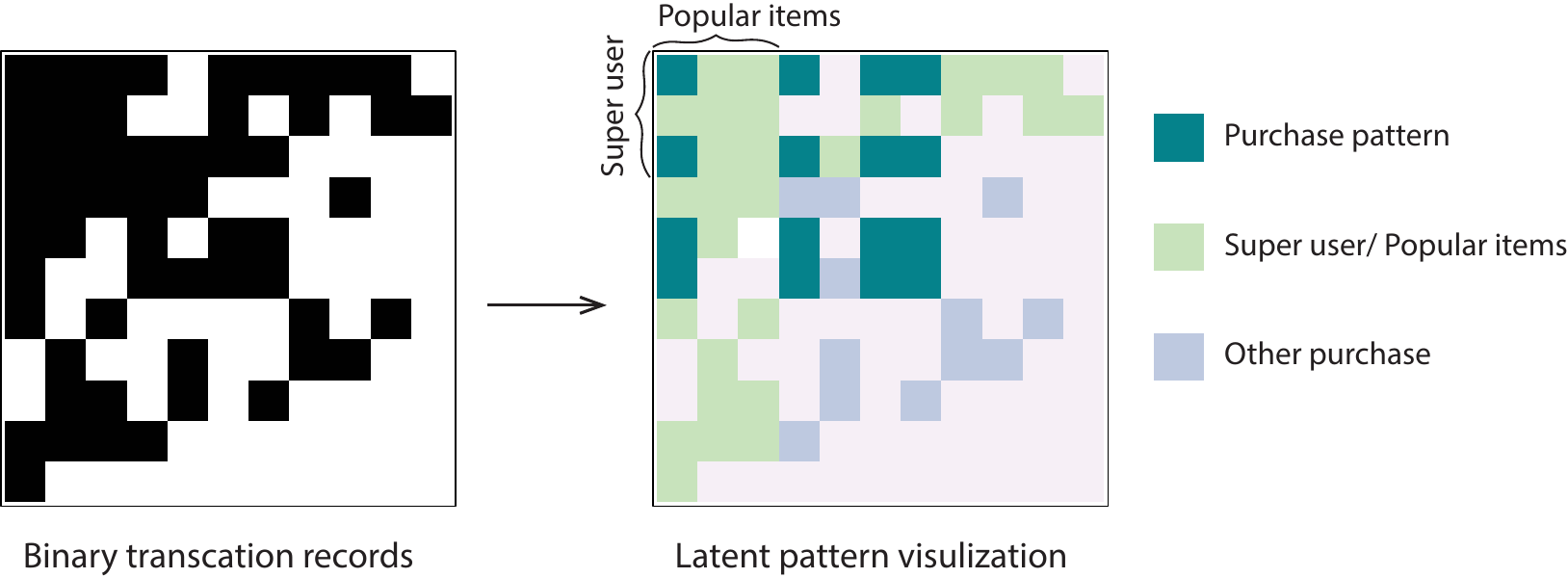}
    \caption{Individual bias in binary transaction records data}
    \label{fig:intro}
\end{figure}

Binary matrix has been commonly utilized in multiple fields. Low rank pattern in a binary matrix is defined as rank-1 sub matrices formed by the product of two binary bases. Comparing to continuous data, recent studies demonstrated the rank-1 sub-matrices in binarized data is more robust for mechanism interpretation or sub-space representation \cite{rukat2017bayesian,wan2019mebf}, because binary data in general bears reduced noise than continuous data. However, variations of the probability of 1s of rows or columns may lead to varied element-wise probability, causing a fairness issue in low rank representation of binary data  \cite{zhu2018fairness}.

An intuitive example is binary transaction records data (figure \ref{fig:intro}), in which 1s represent the purchase of items (each column) by users (each row). Different items or users are with varied activities in conducting purchasing. For example, super-users make more purchase, which can be independent to items, and popular items are more likely to be purchased. The transactions made between super users and popular items unnecessarily imply good recommendations since it can be simply caused by the high purchase chance. On the other hand, the group of items having a strong purchase preference within a certain group of users comparing to their background purchase rate is more valuable for recommendation. However, the fairness issue in the low rank representation of binary data due to varied element-wise background probability was rarely considered in existing formulations \cite{yao2017beyond}. 

Here, we propose BIND, a binary data denoising method via considering the data is generated from the mixture of to-be-identified rank-1 patterns and an unknown background of element-wise probability, plus i.i.d. errors. BIND estimates the mixture distribution of the probabilities of 1s from rank-1 patterns and background in each row and column, by which the rows or columns that are more likely with true rank-1 patterns are distinguished by the over-represented 1s comparing to the background.

Key contributions of this work include: (1) BIND is the first of this kind of binary data denoising method via considering non-identical background distribution, (2)  BIND can be easily implemented with state-of-the-arts BMF or CC methods for a fairer rank-1 pattern detection, and (3) rigorous mathematical derivations are provided to characterize the property of disparate background distribution.

\section{Background}
\subsection{Notations}
We denote matrix, vector and scalar by uppercase, bold lowercase and lowercase character \(X, \textbf{x},x\). Superscript with \(\times\) indicates dimensions, while subscript implies index, such as \(X^{m\times n}_{ij}\) and \(\textbf{x}^{m\times 1}_i\). \(P_{ij}\triangleq P(X_{ij}=1)\) denotes the element-wise probability of 1 at the element \(X_{ij}\). \(|\textbf{x}|\) and \(|X|\) represent the \(l1\) norm of vector and matrix, and \(\circ\) represents Hadamard product.

\subsection{Related work}
Existing methods of binary matrix low rank representation fall into two major categories, namely binary matrix decomposition (BMF) and co-clustering (CC). BMF aims to decompose a binary matrix as the product of two low rank binary matrix by maximizing its overall fitting to the original matrix. The formulation of BMF is thus generalized as \[X^{m\times n}=U^{m\times k}V^{k\times n}+E^{m\times n}\], where \(U\) and \(V\) are the low rank pattern matrices, and \(E\) is the flipping error with \(p(1\rightarrow 0)=p(0 \rightarrow 1)=p_0\). BMF problem is NP-hard, for which multiple heuristic algorithms have been developed. One representative method is ASSO, which retrieves candidate patterns by using row-/column-wise correlation \cite{miettinen2008discrete}. More recently, Bayesian probability measure and geometrical identification largely improved the efficiency and accuracy of BMF \cite{rukat2017bayesian,wan2019mebf}.

In contrast, the co-clustering (CC) method, also named as bi-clustering in statistics and computational biology, maximizes the enrichment of 1s in the detected patterns based on certain thresholds\cite{kaiser2008toolbox}. For given \(X^{m\times n}\), most CC methods aim to identify the cardinality of index set \(I_l\times J_l\), \(l=1,...,k\), where \(I_l \in \{1,...,m\}\) and \(J_l \in \{1,...,n\}\),\[s.t.\quad P_{ij}=\begin{cases} p_l,\, if \, i,j\in I_l\times J_l\\ p_0,\, if \, i,j\notin I_l\times J_l\end{cases} \forall l=1,...,k\]
Noted, both BMF and CC methods assume the binary data is formed by the sum of to-be-identified rank-1 submatrices and an i.i.d error, where individuals bias has not been investigated. 

\subsection{Problem formulation}
We consider the observed binary data with disparate element-wise background probability that is generated by:
\[X=U^{m\times k}V^{k\times n}+X^0+E'+E \tag{$\star$}\]
Compared with the formulation of BMF, \(X^0\) is the background matrix. \(E'\) is the pattern wise observation error that each element from pattern \(l\) has a probability of \(1-p_l\) to be zero, while the elements outside patterns will not be impacted, i.e., \(P^{E'}_{ij}(1\rightarrow 0)=1-p_l\), if \(i,j\in I_l\times J_l\), \(P^{E'}_{ij}(1\rightarrow 0)=0\), if \(i,j\notin I_l\times J_l\), \(\forall l = 1,...,k\). 

Under this definition, by considering \(X^0\) are 0, current BMF and CC described in 2.2 are special case of (\(\star\)), and were designed to handle the pattern observation error \(E'\) and elment-wise flipping error \(E\). Thus, the bottleneck of a fair binary submatrix detection lies in differentiating true patterns from the background \(X^0\). We consider the assumption of \(P(X_{ij}^0=1)\propto \textbf{p}^{0,r}_i\cdot \textbf{p}^{0,c}_j\) that can cover most of the binary data with disparate background, when \(X_{ij}^0\) are conditionally independent with fixed row or column index, like the purchase transaction data in figure \ref{fig:intro} with items of different popularity and users of different activity. We denote the row/column-wise background probability as \(\textbf{p}^{m\times 1,\,0, row}\) and \(\textbf{p}^{n\times 1,\,0, column}\), shorted as \(\textbf{p}^{0,r}\) and \(\textbf{p}^{0,c}\), where \(\textbf{p}^{0,r}_i\propto\hat{\textbf{p}}^{0,r}_i=\frac{|X^0_{i:}|}{n}\) and \(\textbf{p}^{0,c}_j\propto\hat{\textbf{p}}^{0,c}_j=\frac{|X^0_{:j}|}{m}\), and \(P(X_{ij}^0=1)\) can be unbiasedly estimated as \(\frac{|X^0_{i:}| \cdot |X^0_{:j}|}{ |X^0|}\).

\begin{figure}
    \centering
    \includegraphics[width=\linewidth]{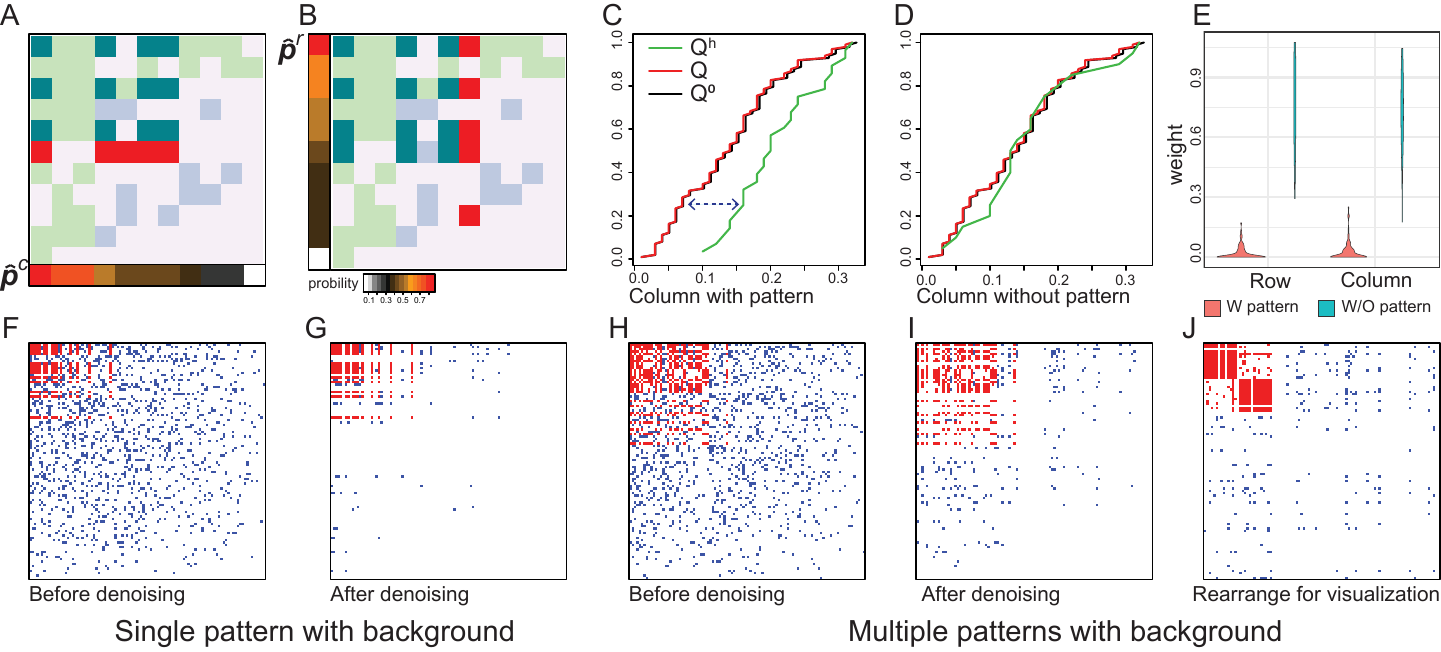}
    \caption{quantile shift denoising}
    \label{fig:algo}
\end{figure}

\section{BIND framework}
Here we propose the BIND\footnote{Code and material can be access at https://github.com/clwan/BIND} framework to identify the rank-1 patterns (\(U,V\)) from binary data \(X\) with disparate background \(X^0\). Denoting \(P(X_{ij}^0=1)\) as \(P_{ij}^0\), the element-wise probability \(P_{ij}\triangleq P(X_{ij}=1)\) can be derived as:
\[
P_{ij}=\begin{cases}P^0_{ij}\propto \textbf{p}^{0,r}_i\cdot\textbf{p}^{0,c}_j,\; ij\notin any\,I_l\times J_l\\1-(1-P_{ij}^0)(1-p_l)=p_{ij}^0+(1-p_{ij}^0)p_l,\; ij\in I_l\times J_l\end{cases} \tag{$\ast$} \]
Specifically, the row and column probability \(\textbf{p}^r_i\) and \(\textbf{p}^c_j\) can be estimated by \(\hat{\textbf{p}}^r_i=\frac{|X_{i:}|}{n}\) and \(\hat{\textbf{p}}^c_j=\frac{|X_{:j}|}{m}\). Noted, \(\textbf{p}^r\) and \(\textbf{p}^c\) are formed by the mixture distribution of \(\textbf{p}^{0,r}, \textbf{p}^{0,c}\) and \(p_l\). Analogous to BMF and CC problem, direct inference of \(\textbf{p}^{0,r}, \textbf{p}^{0,c}\) and \(p_l\) from \(\textbf{p}^r\) and \(\textbf{p}^c\) is NP-hard. As shown in Figure \ref{fig:algo}A-D, instead of computing \(\textbf{p}^{0,r}, \textbf{p}^{0,c}\) and \(p_l\), BIND identifies the rows and columns that are most likely conceiving patterns comparing to others. The elements of the intersection of the identified rows and columns more likely represent true rank-1 patterns (figure \ref{fig:algo}F-J). For this task, we introduce the quantile\_shift algorithm with thorough mathematical proof.

%\subsection{Quantile shift denoising}
\(Quantile\_shift\) algorithm is designed to distinguish rows or columns that are more likely conceiving rank-1 patterns. First, we introduce the concept of empirical distribution of row-/column-wise probability, denoted as \(\textbf{F}^r\) and \(\textbf{F}^c\) (figure \ref{fig:algo}A,B), which are sampled from \(\hat{\textbf{p}}^r\) and \(\hat{\textbf{p}}^c\) with probability \(P(\textbf{F}^{r}=\hat{\textbf{p}}^r_i)\propto\hat{\textbf{p}}^r_i\) and \(P(\textbf{F}^{c}=\hat{\textbf{p}}^c_j)\propto\hat{\textbf{p}}^c_j\). The observed probability of hits \(\textbf{F}^h\) of any row \(i_0\) or column \(j_0\) is defined by \(\textbf{F}^{h,r,i_0}=\{\hat{\textbf{p}}^c_j|j\ with\ X_{i_0j}=1\}\) and \(\textbf{F}^{h,c,j_0}=\{\hat{\textbf{p}}^r_i|i\ with\ X_{ij_0}=1\}\). Here \(\textbf{F}^r\) and \(\textbf{F}^c\) characterize the distribution of \(\hat{\textbf{p}}^r\) and \(\hat{\textbf{p}}^c\) of the 1s randomly drawn from \(\hat{\textbf{p}}^r\) and \(\hat{\textbf{p}}^c\). Intuitively, if a row or column conceives a distinct pattern, the quantile function \(Q^h\) of \(\textbf{F}^h\) will shift drastically from the quantile function \(Q^c\) of \(\textbf{F}^c\) or \(Q^r\) of \(\textbf{F}^r\) (figure \ref{fig:algo}C). On the other hand, \(Q^h\) will be similar to \(Q^c\) or \(Q^r\) if the row or column does not contain any pattern (figure \ref{fig:algo}D). Hence the shift between \(Q^h\) and \(Q^r\) or \(Q^c\) can serve as a weight \(s\) to differentiate the rows or columns more likely conceiving a pattern (figure \ref{fig:algo}E). Noted, here \(\textbf{F}^{r}\) and \(\textbf{F}^{c}\) serve as proxy of \(\textbf{F}^{0,r}\) and \(\textbf{F}^{0,c}\), which are the empirical distribution of the true background probability of \(\textbf{p}^{0,r}\) and \(\textbf{p}^{0,c}\). In the following content, we prove \(s\) approximates the pattern size within each row or column, i.e., \(s\approx |(UV+E')_{i:}|\ or\ |(UV+E')_{:j}|\) with certain bounds.

The input of \(Quantile\_shift\) algorithm include a row or column index \(i_0/j_0\), and \(\hat{\textbf{p}}^c\) or \(\hat{\textbf{p}}^r\), by which the empirical distribution \(\textbf{F}^c\) or \(\textbf{F}^r\) will be sampled, and the probability of hit of the row or column \(\textbf{F}^h\) will be computed. The output is weight \(s\) of the row or column. Without loss of generality, we illustrate the \(Quantile\_shift\) algorithm for computing the weight of row \(i_0\) below, and detailed mathematical proofs as follows:
\begin{algorithm}
\SetAlgoLined
\textbf{Inputs:}  Row index \(i_0\), Estimated column-wise probability \(\hat{\textbf{p}}^c\) \\
\textbf{Outputs:} Estimated weight of significance of row \(i_0\), \(s^r_{i_0}\) \\
\(Quantile\_shift (i_0,\hat{\textbf{p}}^c)\):\\
\(\textbf{F}^c\leftarrow sampled\,from\,\hat{\textbf{p}}^c\, with\, probability\, \hat{\textbf{p}}^c\) \\
\(\textbf{F}^{h}\leftarrow\{\hat{\textbf{p}}^c_j|j\ with\ X_{i_0j}=1\}\)\\
\(\textbf{F}^{(h)}\leftarrow sort(\textbf{F}^h)\), \(a\leftarrow length(\textbf{F}^h)\)\\
\(Q^c(p)=sup(b)\; s.t.\, \frac{|\textbf{F}^c<b|}{length(\textbf{F}^c)}\leq p\; and \; \frac{|\textbf{F}^c>b|+1}{length(\textbf{F}^c)}> p\)\\

\For{j=1...a}{
\uIf{\(\textbf{F}^{(h)}_j>Q^c(\frac{j}{a})\)}{
\(t_j\leftarrow the\ column\ index\ s.t.\ \textbf{F}^{(h)}_j=\hat{\textbf{p}}^c_{t_j}\ \& \  X_{i_0t_j}=1\)\\
\(s\leftarrow s+\frac{\textbf{F}^{(h)}_j-Q^c(\frac{j}{a})}{1-\hat{\textbf{p}}^c_{\textbf{t}_j}}\)
}
}
 \caption{Quantile\_shift}
\end{algorithm}

\begin{lem}
If \(\hat{\textbf{p}}^r\) and \(\hat{\textbf{p}}^c\)  are unbiased estimation of \(\textbf{p}^{0,r}\) and \(\textbf{p}^{0,c}\). The weight computed by quantile\_shift is an unbiased estimation of the sum of \(E(U^{m\times k}V^{k\times n}+E')\) with respect to that column or row. 
\end{lem}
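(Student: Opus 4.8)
The plan is to show that the expected contribution to $s$ from each hit in row $i_0$ equals the expected number of pattern-generated $1$s at that position, so that summing over all hits yields $E(|(UV+E')_{i_0:}|)$. First I would fix row $i_0$ and partition its columns into the pattern columns (those $j$ with $i_0,j\in I_l\times J_l$ for some $l$) and the background columns. For a background column $j$, the element $X_{i_0 j}$ is a $1$ only through $X^0$ (up to the i.i.d. error $E$), so by the problem formulation $(\ast)$ the estimate $\hat{\textbf{p}}^c_j$ is, in expectation, the true background column probability $\textbf{p}^{0,c}_j$; since $\textbf{F}^c$ is sampled from $\hat{\textbf{p}}^c$ with probability proportional to $\hat{\textbf{p}}^c$, the order statistics $\textbf{F}^{(h)}_j$ of the background hits are distributed like draws from $\textbf{F}^{0,r}$, i.e.\ like $\textbf{F}^c$ used to build $Q^c$. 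Hence for a pure-background hit, $E[\textbf{F}^{(h)}_j - Q^c(j/a)] = 0$ and the indicator $\textbf{F}^{(h)}_j > Q^c(j/a)$ contributes zero net mass in expectation.

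Next I would handle a hit coming from a pattern column $j\in J_l$. Here $X_{i_0 j}=1$ with probability $p^0_{i_0 j} + (1-p^0_{i_0 j})p_l$ by $(\ast)$, so conditional on being a hit the column probability $\hat{\textbf{p}}^c_j$ is inflated above its background value $\textbf{p}^{0,c}_j$, and the order statistic sits above the corresponding background quantile $Q^c(j/a)$. The key algebraic point is the normalization in the update $s \leftarrow s + \frac{\textbf{F}^{(h)}_j - Q^c(j/a)}{1-\hat{\textbf{p}}^c_{t_j}}$: the excess $\textbf{F}^{(h)}_j - Q^c(j/a)$ is, in expectation, the extra per-element probability $(1-p^0_{i_0 j})p_l$ contributed by the pattern relative to background, and dividing by $1-\hat{\textbf{p}}^c_{t_j}$ (which estimates $1-P^0$ up to lower-order terms) rescales this to $p_l$ — exactly $E[(E')_{i_0 j}]$ for an element inside pattern $l$ that survives the pattern observation error. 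Summing $p_l$ over the $|J_l \cap \{\text{hits}\}|$ surviving pattern elements, in expectation, reconstructs $E(|(UV+E')_{i_0:}|)$; the deterministic $U V$ entry contributes $1$ and $E'$ subtracts the flips, so their sum has expectation $p_l$ per pattern element, matching term by term.

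I would then combine the two cases by linearity of expectation over the hits $j=1,\dots,a$, using that the sorted-quantile pairing $\textbf{F}^{(h)}_j \leftrightarrow Q^c(j/a)$ respects the ranks so background hits align with background quantiles and pattern hits with the upper tail; this is where the argument that $\textbf{F}^c$ is an unbiased proxy for $\textbf{F}^{0,r}$ (stated in the text) is invoked to control the quantile function $Q^c$ itself. The main obstacle I anticipate is precisely this last coupling step: $Q^c$ is computed from the \emph{observed} mixture $\hat{\textbf{p}}^c$, not from the pure background $\textbf{p}^{0,c}$, so one must argue that the pattern columns are a small enough fraction that the empirical quantile function of the mixture agrees with that of the background to the order needed, and that the rank-matching in the \texttt{for} loop does not systematically mis-pair background hits with inflated quantiles. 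Making the "unbiased" claim precise therefore requires either an asymptotic (large $m,n$) regime or an explicit assumption bounding pattern density, and the bulk of the proof will be spent quantifying this approximation and verifying the $\frac{1}{1-\hat{\textbf{p}}^c_{t_j}}$ rescaling is the correct first-order correction.
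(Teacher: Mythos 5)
Your proposal follows essentially the same route as the paper's proof: the same split of the hits of row \(i_0\) into background columns (where \(E[\textbf{F}^{(h)}_j - Q^c(j/a)]=0\) because \(\textbf{F}^h\) and \(\textbf{F}^c\) coincide with \(\textbf{F}^{0,c}\)) and pattern columns (where the excess is \((1-\hat{\textbf{p}}^c_{t_j})p_l\) and the division by \(1-\hat{\textbf{p}}^c_{t_j}\) recovers \(p_l\)), followed by linearity of expectation. The coupling issue you flag at the end --- that \(Q^c\) is built from the observed mixture rather than the pure background --- is exactly what this lemma's hypothesis (unbiasedness of \(\hat{\textbf{p}}^c\)) assumes away, and the paper quantifies that residual bias separately in Lemmas 2 and 3 rather than inside this proof.
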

\begin{proof}
If \(\hat{\textbf{p}}^r\) and \(\hat{\textbf{p}}^c\)  are unbiased estimation of \(\textbf{p}^{0,r}\) and \(\textbf{p}^{0,c}\), \(\textbf{F}^r\) or \(\textbf{F}^c\) generated from \(\hat{\textbf{p}}^r\) and \(\hat{\textbf{p}}^c\) form unbiased empirical distribution of row-/column-wise probability of 1s of \(X^0\), i.e. \(P(\textbf{F}^{0,r}=\textbf{p}^{0,r}_i)\propto\textbf{p}^{0,r}_i\) and \(P(\textbf{F}^{0,c}=\textbf{p}^{0,c}_j)\propto\textbf{p}^{0,c}_j\). Without loss of generality, we prove the lemma for the computation of the weight of the \(i_0\)th row. Denote \(\textbf{t}=\{j|X_{i_0j}=1\}\) and \(a=length(\textbf{t})\), by \(\textbf{Algorithm 1}\) and \((\ast)\), \(\forall\ j\in\{1,...,a\}:\)\\
If \(i_0t_j\notin any\, I_l\times J_l\),
\[
E(\textbf{F}^{(h)}_j-Q(\frac{j}{a}))=
E(\hat{\textbf{p}}^c_{\textbf{t}_j}-sup(b|\frac{|\textbf{F}^c<b|}{length(\textbf{F}^c)}\leq\frac{j}{a}))=0
\]
Else, \(i_0t_j \in I_l\times J_l\ for\ certain\ l\), 
\[
E(\textbf{F}^{(h)}_j-Q(\frac{j}{a}))=E(\hat{\textbf{p}}^c_{\textbf{t}_j}+(1-\hat{\textbf{p}}^c_{\textbf{t}_j})p_l-sup(b|\frac{|\textbf{F}^c<b|}{length(\textbf{F}^c)}\leq\frac{j}{a}))
\]
\[=(1-\hat{\textbf{p}}^c_{\textbf{t}_j})p_l\]
Such that
\[
E(\sum_{j=1}^a\frac{\textbf{F}^{(h)}_j-Q(\frac{j}{a})}{1-\hat{\textbf{p}}^c_{\textbf{t}_j}})=\sum_l\sum_{j=1}^a p_lI=|E(U^{m\times k}V^{k\times n}+E')_{i_0:}|\]
\end{proof}

\begin{lem}
For \(X\) in (\(\star\)), and \(P^0_{ij}\triangleq P(X^0_{ij})\propto \textbf{p}^{0,r}_i\cdot \textbf{p}^{0,c}_j\), the probability estimated by  \(\hat{p}^r_i=\frac{|X_{i:}|}{n}\) and \(\hat{p}^c_j=\frac{|X_{:j}|}{m}\) are bounded by \(|\hat{p}^r_i-\textbf{p}^{0,r}_i|\leq \frac{\sum_{l=1}^k\textbf{1}(i\in I_l)p_l|J_l|}{n}\), and \(|\hat{p}^c_j-\textbf{p}^{0,c}_j|\leq \frac{\sum_{l=1}^k\textbf{1}(j\in J_l)p_l|I_l|}{m}\).
\end{lem}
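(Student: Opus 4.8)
The plan is to reduce the claim to a one-line computation from the element-wise law $(\ast)$, followed by a union bound over the patterns.

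\textbf{Step 1: the per-entry gap.} First I would read off from $(\ast)$ that $P_{ij}\ge P^0_{ij}$ for every $(i,j)$, with equality precisely when $ij$ belongs to no block $I_l\times J_l$, and that when $ij\in I_l\times J_l$ the excess is $P_{ij}-P^0_{ij}=(1-p^0_{ij})p_l$, which lies in the interval $[0,p_l]$ because $p^0_{ij}\in[0,1]$ and $p_l\in[0,1]$. So the gap at any entry is nonnegative and, at a pattern entry of pattern $l$, is at most $p_l$.

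\textbf{Step 2: from entries to the row statistic.} Next I would pass to the row sum. The entries of row $i$ of $X$ are Bernoulli with means $P_{ij}$, so $E|X_{i:}|=\sum_{j=1}^n P_{ij}$; identifying $\hat p^r_i=|X_{i:}|/n$ with its expectation exactly as in the ``unbiased estimation'' convention of Lemma 1, and using the normalisation in which $\textbf{p}^{0,r}_i$ is the expected background density $\frac{1}{n}\sum_{j=1}^n P^0_{ij}$ of row $i$ (the unbiased estimate recorded in Section 2.3), subtraction gives
\[
\hat p^r_i-\textbf{p}^{0,r}_i=\frac{1}{n}\sum_{j=1}^n\bigl(P_{ij}-P^0_{ij}\bigr).
\]
By Step 1 each summand is $\ge 0$, so the left side is nonnegative and equal to its own absolute value.

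\textbf{Step 3: collapse and bound.} Only entries lying in some block contribute to the sum, so it reorganises as a sum over the patterns $l$ with $i\in I_l$ and, within each, over the $|J_l|$ columns of $J_l$; bounding each term by $p_l$ as in Step 1,
\[
0\le \hat p^r_i-\textbf{p}^{0,r}_i\le\frac{1}{n}\sum_{l=1}^k\textbf{1}(i\in I_l)\,p_l\,|J_l|,
\]
which is the stated bound on $|\hat p^r_i-\textbf{p}^{0,r}_i|$. The column bound is obtained verbatim by transposing: swap $(m,n)$, $I_l\leftrightarrow J_l$, and rows for columns.

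\textbf{The main obstacle.} The only nonroutine point is that $(\ast)$ specifies $P_{ij}$ only under a single block membership and does not fold in the symmetric flip $E$. I would handle this by either invoking the disjointness of the blocks $I_l\times J_l$ built into the CC/BMF formulation of Section 2.2, or, if overlaps are allowed, observing that however overlapping pattern observations combine, the true excess at $ij$ is still at most $\sum_l\textbf{1}(ij\in I_l\times J_l)p_l$, so the union bound of Step 3 still holds; and the i.i.d.\ flip $E$ merely rescales the pattern-minus-background gap by $(1-2p_0)$, which can only tighten the bound. I expect this bookkeeping about overlaps and flips to be the main (and essentially only) subtlety, everything else being the elementary algebra of Steps 1--3.
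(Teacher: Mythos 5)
Your argument is correct and is exactly the ``direct derivation from $(\star)$ and $(\ast)$'' that the paper asserts for Lemma 2 without writing out: the per-entry excess is $(1-P^0_{ij})p_l\le p_l$ at entries of pattern $l$ and zero elsewhere, and summing over the $|J_l|$ (resp.\ $|I_l|$) pattern columns (rows) and dividing by $n$ (resp.\ $m$) gives the stated bound. The one imprecision is your closing remark that the i.i.d.\ flip $E$ ``can only tighten the bound'' --- the flip also perturbs background entries, shifting each by $p_0(1-2P^0_{ij})$, which is positive for sparse backgrounds --- but since $(\ast)$ (and hence the lemma as the paper intends it) already omits $E$, this side comment does not affect the validity of your proof.
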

Lemma 2 can be derictly derived from \((\star)\) and \((\ast)\).
\begin{lem}
The weight of the \(i_0\)th row (or similarly \(j_0\)th column) is with a bias led by the biasedly estimated \(\hat{\textbf{p}}^c\) and \(\hat{\textbf{p}}^r\), which is bounded by \(E(s-|(UV+E')_{i_0:}|)\leq \frac{max(\textbf{F}^{c})+max(\frac{|E(UV+E'):j|}{m})(|\textbf{F}^{c}|+1)}{min(1-\textbf{p}^h)|\textbf{F}^{c}|}\).  
\end{lem}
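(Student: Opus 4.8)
The plan is to separate out the effect of the bias in $\hat{\textbf{p}}^c$ by running Algorithm 1 twice: once on the biased estimate $\hat{\textbf{p}}^c$, giving the weight $s$, and once on the true background vector $\textbf{p}^{0,c}$ (a trivially unbiased choice), giving a weight $\tilde{s}$. Lemma 1 then gives $E(\tilde{s})=|E(UV+E')_{i_0:}|$, and since $UV+E'$ has nonnegative entries this equals $E|(UV+E')_{i_0:}|$; hence $E\big(s-|(UV+E')_{i_0:}|\big)=E(s-\tilde{s})$, so it suffices to upper bound $s-\tilde{s}$ — a one-sided, worst-case estimate is enough because the lemma only asserts an upper bound.

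Next I would expand $s-\tilde{s}$ through the update rule of Algorithm 1: it is a sum, over the hit positions $j$ with $\textbf{F}^{(h)}_j>Q^c(j/a)$, of $\frac{\textbf{F}^{(h)}_j-Q^c(j/a)}{1-\hat{\textbf{p}}^c_{\textbf{t}_j}}$ minus its analogue formed from $\textbf{p}^{0,c}$. Three perturbations must be controlled: (i) each hit value $\textbf{F}^{(h)}_j=\hat{\textbf{p}}^c_{\textbf{t}_j}$ against $\textbf{p}^{0,c}_{\textbf{t}_j}$; (ii) the empirical quantile $Q^c$ against the one built from $\textbf{p}^{0,c}$, which moves both because every sampled value of $\textbf{F}^c$ is shifted and because the empirical quantile has finite resolution $1/|\textbf{F}^c|$, so the ``$+1$'' continuity term that made Lemma 1 exactly unbiased is now accurate only to within one order-statistic gap; and (iii) the denominator $1-\hat{\textbf{p}}^c_{\textbf{t}_j}$ against $1-\textbf{p}^{0,c}_{\textbf{t}_j}$. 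For (i) and the value-shift part of (ii), Lemma 2 supplies the uniform bound
\[
|\hat{\textbf{p}}^c_j-\textbf{p}^{0,c}_j|\ \le\ \frac{\sum_{l=1}^k\textbf{1}(j\in J_l)p_l|I_l|}{m}\ =\ \frac{|E(UV+E')_{:j}|}{m}\ \le\ \max_j\frac{|E(UV+E')_{:j}|}{m}=:\delta,
\]
so every value entering $\textbf{F}^c$, and hence any fixed quantile of $\textbf{F}^c$, moves by at most $\delta$; the one-order-statistic slack in (ii) changes a quantile value by at most $\max(\textbf{F}^c)-\min(\textbf{F}^c)\le\max(\textbf{F}^c)$; and for (iii), since the $\textbf{t}_j$ index hit columns, every denominator is at least $\min(1-\textbf{p}^h)$.

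Assembling these estimates, each summand's numerator is perturbed by at most $\delta$ from the value shifts, plus a single shared $\max(\textbf{F}^c)$ from the quantile slack; dividing by the uniform lower bound $\min(1-\textbf{p}^h)$ on the denominators and propagating the $|\textbf{F}^c|$ factors that the definitions of $Q^c$ and $\textbf{F}^c$ attach to the count-based quantities through the summation reproduces
\[
s-\tilde{s}\ \le\ \frac{\max(\textbf{F}^c)+\delta\,(|\textbf{F}^c|+1)}{\min(1-\textbf{p}^h)\,|\textbf{F}^c|}.
\]
Taking expectations (conditionally on the realized sample $\textbf{F}^c$) and using $E(\tilde{s})=E|(UV+E')_{i_0:}|$ then gives the claimed bound for row $i_0$; the column case is the mirror image, with rows and columns, and $m$ and $n$, interchanged, using the matching inequality of Lemma 2.

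I expect step (ii) to be the main obstacle. Both the sort order of the hit values and the set of positions $j$ with $\textbf{F}^{(h)}_j>Q^c(j/a)$ are themselves functions of $\hat{\textbf{p}}^c$, so replacing $\hat{\textbf{p}}^c$ by $\textbf{p}^{0,c}$ reshuffles these combinatorial objects rather than only altering the numbers fed into them. The delicate point is to show that this reshuffling costs at most one extra order-statistic gap per quantile evaluation — so that only the single $\max(\textbf{F}^c)$ term appears, not an accumulation over positions — while simultaneously handling the perturbation of the sampling weights of $\textbf{F}^c$, which are themselves proportional to $\hat{\textbf{p}}^c$. Everything else is routine bookkeeping once Lemma 2 is invoked.
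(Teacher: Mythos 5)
Your decomposition is essentially the paper's own argument: the paper likewise compares the sum built from $\hat{\textbf{p}}^c,\ Q^c,\ \textbf{F}^{(h)}$ against the one built from $\textbf{p}^{0,c},\ Q^{0,c},\ \textbf{F}^{0,(h)}$ (your $\tilde{s}$), invokes Lemma 1 for the latter, bounds each summand's change by $2\max_{z\in(0,1)}\{Q^c(z)-Q^{0,c}(z)\}/(1-\hat{\textbf{p}}^c_{\textbf{t}_j})$, and controls the quantile shift via Lemma 2 by $\frac{\max(\hat{\textbf{p}}^c)+\delta}{|\hat{\textbf{p}}^c|}+\delta$ with your $\delta=\max_j\frac{|E(UV+E')_{:j}|}{m}$ --- exactly your items (i)--(iii). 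The step you flag as the main obstacle is not resolved in the paper either: the paper simply sums the per-term bound over all $a$ hit positions and therefore \emph{concludes} with $E(s-|E(UV+E')_{i_0:}|)\leq \frac{a(\max(\hat{\textbf{p}}^c)+\delta(|\hat{\textbf{p}}^c|+1))}{\min(1-\hat{\textbf{p}}^c)|\hat{\textbf{p}}^c|}$, which carries an extra factor of $a$ relative to the lemma statement, whereas your assembly aims directly at the $a$-free bound and hand-waves precisely the accumulation question (why the $\delta$ perturbations contribute once rather than $a$ times). So you have reproduced the paper's route at the paper's level of rigor; the combinatorial reshuffling of the sort order and hit set, and the reconciliation of the factor $a$ between proof and statement, are genuine gaps in the source as well as in your sketch.
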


We still use the compututaion of the \(i_0\)th row to illustrate the proof. The case for columns can be similarly derived.
\begin{proof}
By Lemma 2, \(\hat{\textbf{p}}^c\) is a biased estimation of \(\textbf{p}^{0,c}\), where \(\hat{\textbf{p}}^c_j=\frac{|X_{:j}|}{m}\geq \textbf{p}^{0,c}_j=\frac{|X^0_{:j}|}{m},\ j=1,...,m\). Hence \(\textbf{F}^{(h)}\geq \textbf{F}^{0,(h)}\), suggesting \(1-\textbf{F}^{0,(h)} \geq 1-\textbf{F}^{(h)}\) and \(Q^{c}(\frac{j}{a})\geq Q^{0,c}(\frac{j}{a})\), by which
\[
\left|\frac{\textbf{F}^{0,(h)}_j-Q^{0,c}(\frac{j}{a})}{1-\hat{\textbf{p}}^{0,c}_{\textbf{t}_j}}-\frac{\textbf{F}^{(h)}_j-Q^{c}(\frac{j}{a})}{1-\hat{\textbf{p}}^{c}_{\textbf{t}_j}}\right|\leq 2\left|\frac{\max\limits_{z\in(0,1)}\{Q^c(z)-Q^{0,c}(z)\}}{1-\hat{\textbf{p}}^{c}_{\textbf{t}_j}}\right|
\]

By lemma 2, the bias of \(|\hat{\textbf{p}}^c_j-\hat{\textbf{p}}^{c,0}_j|\) is bounded by \(\frac{|E(UV+E')_{:j}|}{m}\). So the max shift caused in the quantile function \(\max\limits_{z\in(0,1)}\{Q^c(z)-Q^{0,c}(z)\}\) is bounded by \(\frac{max(\hat{\textbf{p}}^c)+max(\frac{|E(UV+E')_{:j}|}{m})}{|\hat{\textbf{p}}^c|}+max(\frac{|E(UV+E')_{:j}|}{m})\). Hence the cumulative bias is bounded by
\[E(s-|E(UV+E')_{i_0:}|)\leq \frac{a(max(\hat{\textbf{p}}^c)+max(\frac{|E(UV+E')_{:j}|}{m})(|\hat{\textbf{p}}^c|+1)))}{min(1-\hat{\textbf{p}}^c)|\hat{\textbf{p}}^c|}\]
\end{proof}
Lemma 1 suggests |\(Q^h\)-\(Q^0\)| is an unbiased estimation of the expected number of 1s in the rank-1 patterns and Lemma 2-3 provide the bound of the bias of |\(Q^h\)-\(Q\)| when \(Q^0\) is biasedly estimated as \(Q\).

\begin{Thrm}[Quantile\_shift]
For a relative sparse binary matrix, the weight calculated by Quantile\_shift sufficiently characterizes the indices of the patterns with largest \(P_l|I_l|\) and \(P_l|J_l|\).
\end{Thrm}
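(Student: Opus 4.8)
The plan is to combine the unbiasedness result of Lemma 1 with the bias bound of Lemma 3 and a routine concentration estimate, and then read off the theorem by comparing the resulting weight of a row (or column) against the pattern masses \(p_l|J_l|\) (resp.\ \(p_l|I_l|\)).

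First I would make the phrase ``sufficiently characterizes'' precise. By Lemma 1, when \(\hat{\textbf{p}}^r,\hat{\textbf{p}}^c\) are exact one has \(E(s^r_{i_0})=|E(UV+E')_{i_0:}|=\sum_{l=1}^{k}\textbf{1}(i_0\in I_l)\,p_l|J_l|\) and, symmetrically, \(E(s^c_{j_0})=\sum_{l=1}^{k}\textbf{1}(j_0\in J_l)\,p_l|I_l|\) (up to an overlap correction that can only shrink each summand). Thus the expected weight of a row is the total mass \(\sum_l p_l|J_l|\) of the patterns containing it, so ranking rows by \(s\) ranks them in expectation by the largest \(p_l|J_l|\) they belong to; in particular every row of \(l^\star=\arg\max_l p_l|J_l|\) receives expected weight at least \(p_{l^\star}|J_{l^\star}|\), which exceeds that of any row lying only in lighter patterns, and the analogous statement for columns holds with \(p_l|I_l|\). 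This is the precise sense in which \(s\) characterizes the indices of the patterns with largest \(p_l|I_l|\) and \(p_l|J_l|\).

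Second I would pass to the realistic setting in which the background is estimated from \(X\) itself, which contributes a systematic and a random error. The systematic part is exactly Lemma 3: \(E(s-|E(UV+E')_{i_0:}|)\leq \tfrac{a\,(\max(\hat{\textbf{p}}^c)+\max_j(|E(UV+E')_{:j}|/m)(|\hat{\textbf{p}}^c|+1))}{\min(1-\hat{\textbf{p}}^c)\,|\hat{\textbf{p}}^c|}\). The random part I would control by observing that \(s^r_{i_0}=\sum_{j=1}^{a}\tfrac{\textbf{F}^{(h)}_j-Q^c(j/a)}{1-\hat{\textbf{p}}^c_{\textbf{t}_j}}\textbf{1}(\textbf{F}^{(h)}_j>Q^c(j/a))\) is a sum of \(a\) terms each lying in \([0,(\min(1-\hat{\textbf{p}}^c))^{-1}]\), so a Hoeffding/Azuma bound gives \(|s^r_{i_0}-E(s^r_{i_0})|=O(\sqrt{a\log(mn)})\) with high probability, uniformly over all rows and columns after a union bound. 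For a relatively sparse matrix \(a=|X_{i_0:}|\), \(\max(\hat{\textbf{p}}^c)\) and \(\max_j(|E(UV+E')_{:j}|/m)\) are all small while \(\min(1-\hat{\textbf{p}}^c)\approx 1\), so both the Lemma 3 bias and the \(\sqrt{a}\) fluctuation are \(o(p_{l^\star}|J_{l^\star}|)\) whenever the heaviest pattern is macroscopic; hence with high probability the weight of a row in \(I_{l^\star}\) still strictly exceeds that of any row outside the heavy patterns, and the ranking conclusion survives, with the roles of rows and columns exchanged for the \(p_l|I_l|\) claim.

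The step I expect to be the main obstacle is exactly this last comparison when the patterns \emph{overlap}: the weights of distinct patterns through a shared row (or column) add, so a row contained in many light patterns could a priori outrank a row in the single heaviest one. One must therefore either restrict to the regime in which \(p_{l^\star}|J_{l^\star}|\) dominates \(\sum_{l\neq l^\star}\textbf{1}(i_0\in I_l)\,p_l|J_l|\) through every row, or weaken the conclusion to ``the top-weighted rows and columns are covered by the patterns of largest mass'' rather than a one-to-one identification. The remainder is bookkeeping: substitute the sparsity hypothesis into the Lemma 3 bound to make the bias an explicit \(o(1)\cdot p_{l^\star}|J_{l^\star}|\), union-bound the concentration inequality over the \(m+n\) rows and columns, and add the two error terms to exhibit the separation gap between heavy and light indices.
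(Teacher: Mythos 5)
Your proposal follows essentially the same route as the paper: the paper's proof simply restates the Lemma~3 bias bound, approximates it as \(\frac{a}{\min(1-\hat{\textbf{p}}^c)}\max\{\frac{\max(\hat{\textbf{p}}^c)}{|\hat{\textbf{p}}^c|},\max(\frac{|E(UV+E')_{:j}|}{m})\}\), and argues that sparsity makes this small so that \(s\) approximates the row/column pattern mass and the largest weights pick out the heaviest patterns. Your two additions --- the Hoeffding/union-bound control of the random fluctuation and the explicit caveat that overlapping light patterns through a shared row could outrank the single heaviest pattern --- are both absent from the paper's (quite informal) argument and are genuine improvements rather than departures.
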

\begin{proof}
For \(i0\)th row (or similarly for the \(j0\)th column), 
\[E(s-|(UV+E')_{i_0:}|)\leq \frac{a(max(\hat{\textbf{p}}^c)+max(\frac{|E(UV+E')_{:j}|}{m})(|\hat{\textbf{p}}^c|+1)))}{min(1-\hat{\textbf{p}}^c)|\hat{\textbf{p}}^c|}\]\[\approx \frac{a}{min(1-\hat{\textbf{p}}^c)}max\{\frac{max(\hat{\textbf{p}}^c)}{|\hat{\textbf{p}}^c|}, max(\frac{|E(UV+E')_{:j}|}{m})\}\]
, suggests that when the input matrix and rank-1 patterns are relatively sparse, the weight \(s\) approximates \((UV+E)_{i_0:}\), i.e. largest values in \(\textbf{s}^r\) and \(\textbf{s}^c\) correspond to the rows and  columns of the patterns with largest \(P_l|I_l|\) and \(P_l|J_l|\). 
\end{proof}

BIND framework is developed to implement \(Quantile\_shift\) algorithm with a BMF or CC method, denoted as \(\mathcal{F}\), for a fairer rank-1 pattern identification under the formulation of (\(\star\)). As illustrated in figure \ref{fig:algo}F-J, \(Quantile\_shift\) denoises the majority of the background signal and enables a BMF or CC method better detects \(U^{m\times k}\) and \(V^{k\times n}\). A cutoff \(\tau\) is needed to differentiated the weight of the rows or columns with true patterns (figure \ref{fig:algo}E). Empirically, \(\tau\) could be set from 0.05 to 0.1 in BIND algorithm. 

BIND is capable for one direction denoising. The \(Quantile\_shift\) algorithm is \(O(n)\) or \(O(m)\) for row or column weight computation and the BIND algorithm is \(O(mn)\), which is smaller than most of current BMF and CC methods. The BIND algorithm is detailed below:   

\begin{algorithm}
\SetAlgoLined
\textbf{Inputs:}  Input data \(X^{m\times n}\), Threshold \(\tau\), BMF/CC method \(\mathcal{F}\)\\
\textbf{Outputs:} Pattern matrices \(U^{m\times k}\) and \(V^{k\times n}\) \\
\(BIND (X, \tau,\mathcal{F})\):\\
\(X_{use}\leftarrow 0\cdot X\), \(\textbf{s}^r\leftarrow \textbf{0}^{m\times 1}\), \(\textbf{s}^c\leftarrow \textbf{0}^{n\times 1}\)\\
\(\hat{\textbf{p}}^r_i=\frac{|X_{i:}|}{n}\, \forall i=1,...,m\) and \(\hat{\textbf{p}}^c_j=\frac{|X_{:j}|}{m}\, \forall j=1,...,n\)\\
\For{i=1...m}{
\(\textbf{s}^r_i=Quantile\_shift(i,\hat{\textbf{p}}^c)\)
}
\For{j=1...n}{
\(\textbf{s}^c_j=Quantile\_shift(j,\hat{\textbf{p}}^r)\)
}
\(I^r\leftarrow I({\textbf{s}^r>\tau})\), \(I^c\leftarrow I({\textbf{s}^c>\tau})\), \(X_{use}\leftarrow X \circ (I^r\cdot{I^c}^T)\)\\
\(U,V\leftarrow \mathcal{F}(X_{use},...)\)

 \caption{BIND}
\end{algorithm}

\section{Experiment}
In this section, we evaluate the performance of BIND on synthetic and real-world data sets across different data scenarios. We demonstrate the implementation of BIND with different BIND BMF and CC methods can significantly improve their fairness in detecting rank-1 pattern from binary matrix with disparate background probability. We also highlight the application of BIND framework for better result interpretation on real-world Movielens data.

We simulate synthetic data sets \(X^{100\times 100}\) with fixed size by following (\(\star\)): \(X=U^{m\times k}V^{k\times n}+E'+X^0+E\), with different pattern size \(\in\{10,15,20\}\), pattern number \(k\in\{1,2\}\), observation error \(p_k\in\{0.8,0.9,1.0\}\), background probability \(\textbf{p}^{0,r},\textbf{p}^{0,c}\), and element-wise flipping error \(p_0 \in\{0,0.05\}\). Specifically, background probabilities were generated from uniform distribution \(\textbf{p}^{0,r},\textbf{p}^{0,c}\sim U[0.1,p]\), where \(p\in\{0.5,0.6,0.7\}\) corresponds to different background probabilities. Altogether, we deem 108 data scenarios from the above parameter settings and simulated 30 replicates for each scenario to form a test-bed. Jaccard index \(\textbf{D}=\frac{|X\cap UV|}{|X\cup UV|}\ (X=\ original\ or\ denoised\ data)\) is used as the evaluation metric. For each data scenario, denoising performance is evaluated by the averaged Jaccard index on the 30 replicates. We first compare the performance with respect to different significance threshold \(\tau=\{0,0.05-1\}\), where \(\tau=0\) represents the data without denoising. As shown in figure \ref{fig:simu}A, the denoising process on average increased the Jaccard index by 2.6 fold and denoising efficiency is slightly increased with \(\tau\). Table 1 lists the denoising performance with respect to different number of patterns \(k\), background probability\(p\) and observation probability \(p_k\), where pattern size is set as 15 and \(\tau=0.1\).

We benchmark BIND by implementing with recently developed BMF method LOM and CC method Biclust, which showed top performance among similar state-of-the-arts methods \cite{rukat2017bayesian,kaiser2008toolbox}. The implementation of BIND largely increased the accuracy in detecting true patterns, which results in an averaged 7.5 (LOM) and 2.6 (Biclust) fold increase of the Jaccard index  (figure \ref{fig:simu}B,C) .

We also demostrate that BIND increases the interpretation and denoising in real-world Movielens data, in which \(X_{ij}=1\) represents the interest of user \(i\) (row) in rating/watching movie \(j\) (column). Category label of each movie is provided. Intuitively, disparate background probablities naturally exist in this data due to different popularity of movies and activity of users. Data is divided into four regions by the \(I^r\) and \(I^c\) computed in \(\textbf{Algorithm 2}\) (figure \ref{fig:simu}D,E), where \(\textcircled{1}\) is the region most likely with patterns, and \(\textcircled{2}\), \(\textcircled{3}\) and \(\textcircled{4}\) are denoised regions. Users in region \(\textcircled{1}\) watched more movies but less categories comparing to other regions (figure \ref{fig:simu}F), suggesting potential recommendation. In addition, region \(\textcircled{1}\) has smallest dispersion of the number of rated movies with respect to different categories, suggesting more stable rating preference of users towards their preferred movie types in this region (figure \ref{fig:simu}G).  

\begin{figure}
    \centering
    \includegraphics[width=\linewidth]{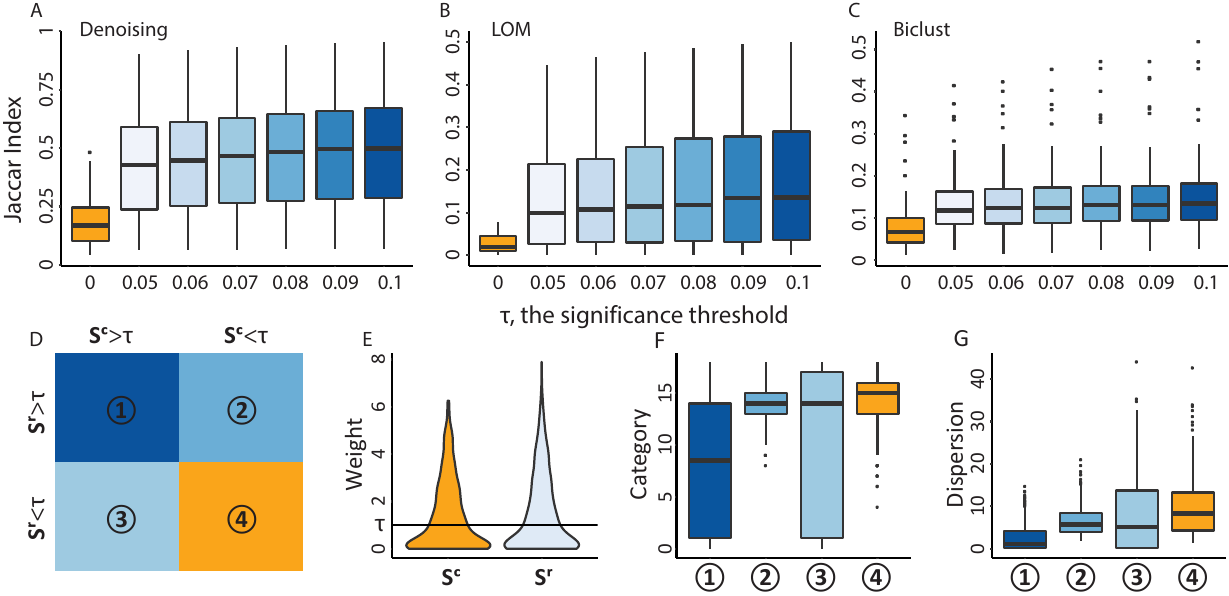}
    \caption{Performance on simulated and Movielens data}
    \label{fig:simu}
\end{figure}

\begin{table}
  \begin{tabular}{m{0.12\linewidth}|m{0.1\linewidth}m{0.1\linewidth}m{0.1\linewidth}m{0.1\linewidth}m{0.1\linewidth}m{0.1\linewidth}}
    \toprule
    \multirow{2}{*}{\diagbox{\(p\)}{\(p_k\)}}&\multicolumn{3}{c}{single pattern}&\multicolumn{3}{c}{Multiple pattern}\\
    &0.8&0.9&1.0&0.8&0.9&1.0\\
    \midrule
    0.5& 0.17/0.67 & 0.18/0.79 & 0.20/0.88 & 0.28/0.59 & 0.31/0.73 & 0.34/0.84 \\
    0.6&0.13/0.48&0.14/0.61&0.16/0.73&0.23/0.47&0.26/0.59&0.28/0.69\\
    0.7&0.11/0.29&0.11/0.37&0.13/0.47&0.19/0.34&0.21/0.40&0.22/0.52\\
  \bottomrule
\end{tabular}
  \caption{Jaccard index before/after denoising}
  \label{tab:example}
\end{table}

\section{acknowledgments}
This work was supported by R01 award \#1R01GM131399- 01, NSF IIS (N0.1850360), Showalter Young Investigator Award from Indiana CTSI and Indiana University Grand Challenge Precision Health Initiative.
%%
%% The next two lines define the bibliography style to be used, and
%% the bibliography file.
\bibliographystyle{ACM-Reference-Format}
\bibliography{BIND}

%%% -*-BibTeX-*-
%%% Do NOT edit. File created by BibTeX with style
%%% ACM-Reference-Format-Journals [18-Jan-2012].

\begin{thebibliography}{6}

%%% ====================================================================
%%% NOTE TO THE USER: you can override these defaults by providing
%%% customized versions of any of these macros before the \bibliography
%%% command.  Each of them MUST provide its own final punctuation,
%%% except for \shownote{}, \showDOI{}, and \showURL{}.  The latter two
%%% do not use final punctuation, in order to avoid confusing it with
%%% the Web address.
%%%
%%% To suppress output of a particular field, define its macro to expand
%%% to an empty string, or better, \unskip, like this:
%%%
%%% \newcommand{\showDOI}[1]{\unskip}   % LaTeX syntax
%%%
%%% \def \showDOI #1{\unskip}           % plain TeX syntax
%%%
%%% ====================================================================

\ifx \showCODEN    \undefined \def \showCODEN     #1{\unskip}     \fi
\ifx \showDOI      \undefined \def \showDOI       #1{#1}\fi
\ifx \showISBNx    \undefined \def \showISBNx     #1{\unskip}     \fi
\ifx \showISBNxiii \undefined \def \showISBNxiii  #1{\unskip}     \fi
\ifx \showISSN     \undefined \def \showISSN      #1{\unskip}     \fi
\ifx \showLCCN     \undefined \def \showLCCN      #1{\unskip}     \fi
\ifx \shownote     \undefined \def \shownote      #1{#1}          \fi
\ifx \showarticletitle \undefined \def \showarticletitle #1{#1}   \fi
\ifx \showURL      \undefined \def \showURL       {\relax}        \fi
% The following commands are used for tagged output and should be
% invisible to TeX
\providecommand\bibfield[2]{#2}
\providecommand\bibinfo[2]{#2}
\providecommand\natexlab[1]{#1}
\providecommand\showeprint[2][]{arXiv:#2}

\bibitem[\protect\citeauthoryear{Kaiser and Leisch}{Kaiser and Leisch}{2008}]%
        {kaiser2008toolbox}
\bibfield{author}{\bibinfo{person}{Sebastian Kaiser} {and}
  \bibinfo{person}{Friedrich Leisch}.} \bibinfo{year}{2008}\natexlab{}.
\newblock \showarticletitle{A toolbox for bicluster analysis in R}.
\newblock  (\bibinfo{year}{2008}).
\newblock


\bibitem[\protect\citeauthoryear{Miettinen, Mielik{\"a}inen, Gionis, Das, and
  Mannila}{Miettinen et~al\mbox{.}}{2008}]%
        {miettinen2008discrete}
\bibfield{author}{\bibinfo{person}{Pauli Miettinen}, \bibinfo{person}{Taneli
  Mielik{\"a}inen}, \bibinfo{person}{Aristides Gionis}, \bibinfo{person}{Gautam
  Das}, {and} \bibinfo{person}{Heikki Mannila}.}
  \bibinfo{year}{2008}\natexlab{}.
\newblock \showarticletitle{The discrete basis problem}.
\newblock \bibinfo{journal}{\emph{IEEE transactions on knowledge and data
  engineering}} \bibinfo{volume}{20}, \bibinfo{number}{10}
  (\bibinfo{year}{2008}), \bibinfo{pages}{1348--1362}.
\newblock


\bibitem[\protect\citeauthoryear{Rukat, Holmes, Titsias, and Yau}{Rukat
  et~al\mbox{.}}{2017}]%
        {rukat2017bayesian}
\bibfield{author}{\bibinfo{person}{Tammo Rukat}, \bibinfo{person}{Chris~C
  Holmes}, \bibinfo{person}{Michalis~K Titsias}, {and}
  \bibinfo{person}{Christopher Yau}.} \bibinfo{year}{2017}\natexlab{}.
\newblock \showarticletitle{Bayesian boolean matrix factorisation}. In
  \bibinfo{booktitle}{\emph{Proceedings of the 34th International Conference on
  Machine Learning-Volume 70}}. JMLR. org, \bibinfo{pages}{2969--2978}.
\newblock


\bibitem[\protect\citeauthoryear{Wan, Chang, Zhao, Li, Cao, and Zhang}{Wan
  et~al\mbox{.}}{2019}]%
        {wan2019mebf}
\bibfield{author}{\bibinfo{person}{Changlin Wan}, \bibinfo{person}{Wennan
  Chang}, \bibinfo{person}{Tong Zhao}, \bibinfo{person}{Mengya Li},
  \bibinfo{person}{Sha Cao}, {and} \bibinfo{person}{Chi Zhang}.}
  \bibinfo{year}{2019}\natexlab{}.
\newblock \showarticletitle{Fast and efficient Boolean matrix factorization by
  geometric segmentation}.
\newblock \bibinfo{journal}{\emph{arXiv:1909.03991}} (\bibinfo{year}{2019}).
\newblock


\bibitem[\protect\citeauthoryear{Yao and Huang}{Yao and Huang}{2017}]%
        {yao2017beyond}
\bibfield{author}{\bibinfo{person}{Sirui Yao} {and} \bibinfo{person}{Bert
  Huang}.} \bibinfo{year}{2017}\natexlab{}.
\newblock \showarticletitle{Beyond parity: Fairness objectives for
  collaborative filtering}. In \bibinfo{booktitle}{\emph{Advances in Neural
  Information Processing Systems}}. \bibinfo{pages}{2921--2930}.
\newblock


\bibitem[\protect\citeauthoryear{Zhu, Hu, and Caverlee}{Zhu
  et~al\mbox{.}}{2018}]%
        {zhu2018fairness}
\bibfield{author}{\bibinfo{person}{Ziwei Zhu}, \bibinfo{person}{Xia Hu}, {and}
  \bibinfo{person}{James Caverlee}.} \bibinfo{year}{2018}\natexlab{}.
\newblock \showarticletitle{Fairness-aware tensor-based recommendation}. In
  \bibinfo{booktitle}{\emph{Proceedings of the 27th ACM CIKM}}.
  \bibinfo{pages}{1153--1162}.
\newblock


\end{thebibliography}

\end{document}